\newtheorem{assumption}{Assumption}
\pgfplotsset{compat=newest}
\DeclareMathOperator*{\tr}{tr}
\DeclareMathOperator*{\diag}{diag}
\newcommand{\noop}[1]{}
\newcommand{\thickbar}{\mathpalette\@thickbar}
\newcommand{\@thickbar}[2]{{#1\mkern1.5mu\vbox{
  \sbox\z@{$#1\mkern-1.5mu#2\mkern-1.5mu$}%
  \sbox\tw@{$#1\overline{#2}$}%
  \dimen@=\dimexpr\ht\tw@-\ht\z@-.8\p@\relax
  \hrule\@height.8\p@ % adjust for the desired rule thickness
  \vskip\dimen@
  \box\z@}\mkern1.5mu}
}
\definecolor{cSPH}{RGB}{241, 140, 157}
\definecolor{cSTSC}{RGB}{152, 218, 198}
\definecolor{cPG}{RGB}{91, 199, 171}
\definecolor{cDIP}{RGB}{229, 214, 41}
\begin{document}
%
%\title{SPEctral Clustering: Inferring the Appropriate Level of $k$.}
\title{$k$ is the Magic Number --- Inferring the Number of Clusters Through Nonparametric Concentration Inequalities}
\titlerunning{Special $k$}
\toctitle{$k$ is the Magic Number --- Inferring the Number of Clusters Through Nonparametric Concentration Inequalities}
% If the paper title is too long for the running head, you can set
% an abbreviated paper title here
%
%\author{Sibylle Hess\inst{1}\orcidID{0000-1111-2222-3333} \and
%Wouter Duivesteijn\inst{1}\orcidID{1111-2222-3333-4444} 
%}
\author{Sibylle Hess (\Letter) \and Wouter Duivesteijn}
\authorrunning{S. Hess and W. Duivesteijn}
\tocauthor{Sibylle Hess (Technische Universiteit Eindhoven) and Wouter Duivesteijn (Technische Universiteit Eindhoven)}
% First names are abbreviated in the running head
% If there are more than two authors, 'et al.' is used.
%
\institute{Data Mining Group, Technische Universiteit Eindhoven, Eindhoven, the Netherlands
\email{\{s.c.hess,w.duivesteijn\}@tue.nl}}
\maketitle              % typeset the header of the contribution
\begin{abstract}
Most convex and nonconvex clustering algorithms come with one crucial parameter: the $k$ in $k$-means. 
To this day, there is not one generally accepted way to accurately determine this parameter. Popular methods are simple yet theoretically unfounded, such as searching for an elbow in the curve of a given cost measure. In contrast, statistically founded methods often make strict assumptions over the data distribution or come with their own optimization scheme for the clustering objective. This limits either the set of applicable datasets or clustering algorithms. 
In this paper, we strive to determine the number of clusters by answering a simple question: given two clusters, is it likely that they jointly stem from a single distribution? To this end, we propose a bound on the probability that two clusters originate from the distribution of the unified cluster, specified only by the sample mean and variance. Our method is applicable as a simple wrapper to the result of any clustering method minimizing the objective of $k$-means, which includes Gaussian mixtures and Spectral Clustering. We focus in our experimental evaluation on an application for nonconvex clustering and demonstrate the suitability of our theoretical results.  Our \textsc{SpecialK} clustering algorithm automatically determines the appropriate value for $k$, without requiring any data transformation or projection, and without assumptions on the data distribution. Additionally, it is capable to decide that the data consists of only a single cluster, which many existing algorithms cannot.

\keywords{k-means  \and concentration inequalities \and spectral clustering \and model selection \and one-cluster clustering \and nonparametric statistics.}
\end{abstract}
%
%
%
%----------------------
% Introduction
%----------------------
\section{Introduction}
When creating a solution to the task of clustering ---finding a natural partitioning of the records of a dataset into $k$ groups--- the holy grail is to automatically determine $k$.  The current state of the art in clustering research has not yet achieved this holy grail to a satisfactory degree.  Since many papers on parameter-free clustering exist, this statement might sound unnecessarily polemic without further elaboration. Hence, we must describe what constitutes ``satisfactory'', which we will do by describing unsatisfactory aspects of otherwise perfectly fine (and often actually quite interesting) clustering solutions. Some solutions can only handle convex cluster shapes \cite{1982Lloyd}, while most real-life phenomena are not necessarily convex.
Some solutions manage to avoid determining $k$, at the cost of having to specify another parameter that effectively controls $k$ \cite{2010Alamgir}. Some solutions define a cluster criterion and an algorithm to iteratively mine the data: the best single cluster is found, after which the algorithm is run again on the data minus that cluster, etcetera \cite{2014Hou}; this runs the risk of finding a local optimum.  We, by contrast, introduce a clustering algorithm that can handle nonconvex shapes, is devoid of parameters that demand the user to directly or indirectly set $k$, and finds the global optimum for its optimization criterion.

%%---------FIGURE intro------------
%\begin{figure}[t!]
%\centering
%\input{plots/introExample}
%\caption{Visualization of a clustering for settings of $k$ equal to two and three.}% Best viewed in color.}
%\label{fig:introExample}
%\end{figure}

We propose a probability bound on the operator norm of centered, symmetric decompositions based on the matrix Bernstein concentration inequality. We apply this bound to assess whether two given clusters are likely to stem from the distribution of the unified cluster. Our bound provides a statistically founded decision criterion over the minimum similarity within one cluster and the maximum similarity between two clusters: this entails judgment on whether two clusters should be separate or unified. Our method is easy to implement and statistically nonparametric. Applied on spectral clustering methods, to the best of the authors' knowledge, providing a statistically founded way to automatically determine $k$ is entirely new.

We incorporate our bound in an algorithm called \textsc{SpecialK}, since it provides a method for SPEctral Clustering to Infer the Appropriate Level $k$.  On synthetic datasets, \textsc{SpecialK} outperforms some competitors, while performing roughly on par with another competitor.  However, when unleashed on a synthetic dataset consisting of just random noise, all competitors detect all kinds of clusters, while only \textsc{SpecialK} correctly determines the value of $k$ to be one. If you need an algorithm that can correctly identify a sea of noise, \textsc{SpecialK} is the only choice.  On four real-life datasets with class labels associated to the data points, we illustrate how all available algorithms, including \textsc{SpecialK}, often cannot correctly determine the value of $k$ corresponding to the number of classes available in the dataset.  We argue that this is an artefact of a methodological mismatch: the class labels indicate one specific natural grouping of the points in the dataset, but the task of clustering is to retrieve \emph{any} natural grouping of the points in the dataset, not necessarily the one encoded in the class label.  Hence, such an evaluation is fundamentally unfit to assess the performance of clustering methods. 
%We acquire deeper knowledge on what happens in these datasets, by investigating the progression of NMI scores and bound-derived $p$-values when increasing the value of $k$.

%==============================
% Three Sides of a Coin: $k$-means, Gaussian Mixtures and Spectral Clustering
%==============================
\section{Three Sides of a Coin: $k$-means, Gaussian Mixtures and Spectral Clustering}
To embed our work in already existing related work, and to describe the preliminary building blocks required as foundation on which to build our work, we must first introduce some notation. We write $\mathbf{1}$ for a constant vector of ones, whose dimension can be derived from context unless otherwise specified. We denote with $\mathbb{1}^{m\times k}$ the set of all binary matrices which indicate a partition of $m$ elements into $k$ sets. 
Such a partition is computed by $k$-means clustering; every element belongs to exactly one cluster.  Let $D\in\mathbb{R}^{m\times n}$ be a data matrix, collecting $m$ points $D_{j\cdot}$, which we identify with their index $j$. The objective of $k$-means is equivalent to solving the following matrix factorization problem:
\begin{align}
    \min_{Y,X}\ \left\|D-YX^\top\right\|^2 &\quad\text{s.t.}\quad Y\in\mathbb{1}^{m\times k}, X\in\mathbb{R}^{n\times k}.\label{eq:kmeans}
    %\max_Y\ \sum_{s=1}^k \frac{Y_{\cdot s}^\top DD^\top Y_{\cdot s}}{|Y_{\cdot s}|} &\quad\text{s.t.}\quad Y\in\mathbb{1}^{m\times k}.\label{eq:trkmeans}
\end{align}
The matrix $Y$ indicates the cluster assignments; point $j$ is in cluster $c$ if $Y_{jc}=1$. The matrix $X$ represents the cluster centers, which are given in matrix notation as $X=D^\top Y\left(Y^\top Y\right)^{-1}$. The well-known optimization scheme of $k$-means, \emph{Lloyd's algorithm} \cite{1982Lloyd}, employs the convexity of the $k$-means problem if one of the matrices $X$ or $Y$ is fixed. The algorithm performs an alternating minimization, updating $Y$ to assign each point to the cluster with the nearest center, and updating $X_{\cdot c}$ as the mean of all points assigned to cluster $c$.
%---------------------
% Gaussian mixtures
%---------------------
\subsection{Gaussian Mixtures}
The updates of Lloyd's algorithm correspond to the expectation and maximization steps of the EM-algorithm~\cite{bishop2006pattern,bauckhage2015clustering}. In this probabilistic view, we assume that every data point $D_{j\cdot}$ is independently sampled 
%according to the procedure:
%\begin{enumerate}
%    \item select cluster $c$ with probability $\pi_c$;
%    \item sample point $D_{j\cdot}$ from the Gaussian distribution 
%    \[p(\xi|c)=\frac{1}{\sqrt{2\pi\epsilon}}\exp\left(-\frac{1}{2\epsilon}\left\|\xi-X_{\cdot c}\right\|^2\right)\sim \mathcal{N}(\xi|X_{\cdot c},\epsilon I)\]
%\end{enumerate}
by first selecting cluster $c$ with probability $\pi_c$, and then 
sampling point $D_{j\cdot}$ from the Gaussian distribution: 
    \[p(\xi|c)=\frac{1}{\sqrt{2\pi\epsilon}}\exp\left(-\frac{1}{2\epsilon}\left\|\xi-X_{\cdot c}\right\|^2\right)\sim \mathcal{N}(\xi|X_{\cdot c},\epsilon I).\]
%\end{enumerate}
This assumes that the covariance matrix of the Gaussian distribution is equal for all clusters: $\Sigma_c=\epsilon I$. 
From this sampling procedure, we compute the log-likelihood for the data and cluster assignments:
\begin{align*}
    \log p(D,Y|X,\epsilon I,\pi) &= \log\left(\prod_{j=1}^m\prod_{c=1}^k\left(\frac{\pi_c}{\sqrt{2\pi\epsilon}}\exp\left(-\frac{1}{2\epsilon}\left\|D_{j\cdot}-X_{\cdot c}^\top\right\|^2\right)\right)^{Y_{jc}}\right)\\
    &=\sum_{j=1}^m\sum_{c=1}^kY_{jc}\left(\ln\left(\frac{\pi_c}{\sqrt{2\pi\epsilon}}\right)-\frac{1}{2\epsilon}\left\|D_{j\cdot}-X_{\cdot c}^\top\right\|^2\right) \\
    &= -\frac{1}{2\epsilon}\left\|D-YX^\top\right\|^2-\frac{m}{2}\ln(2\pi\epsilon)+\sum_{c=1}^k|Y_{\cdot c}|\ln\left(\pi_c\right).
\end{align*}
Hence, if $\epsilon>0$ is small enough, maximizing the log-likelihood of the Gaussian mixture model is equivalent to solving the $k$-means problem.   
%----------------------------------
% Maximum Similarity versus Minimum Cut
%------------------------------------
\subsection{Maximum Similarity versus Minimum Cut}
There are multiple alternative formulations of the $k$-means problem. 
%A straightforward alteration of the Frobenius norm in Equation \eqref{eq:kmeans} employs the identity $\|A\|^2=\tr(AA^\top)$, such that:
Altering the Frobenius norm in Equation \eqref{eq:kmeans} with the identity $\|A\|^2=\tr(AA^\top)$ begets:
\begin{align*}
    \left\|D-YX^\top\right\|^2 &= \|D\|^2 - 2\tr\left(D^\top YX^\top\right) + \tr\left(XY^\top YX^\top\right)\\
    &= \|D\|^2 - \tr\left(Y^\top DD^\top Y\left(Y^\top Y\right)^{-1}\right),
\end{align*}
where the last equality derives from inserting the optimal $X$, given $Y$. Thus, we transform the $k$-means objective, defined in terms of distances to cluster centers, to an objective defined solely on similarity of data points. The matrix $DD^\top$ represents similarity between points, measured via the inner product $sim(j,l)=D_{j\cdot}D_{l\cdot}^\top$. The $k$-means objective in Equation \eqref{eq:kmeans} is thus equivalent to the \emph{maximum similarity problem} for a similarity matrix $W=DD^\top$:
\begin{equation}\label{eq:maxsimilarity}
    \max_{Y\in\mathbb{1}^{m\times k}}Sim(W,Y)=\sum_cR(W,Y_{\cdot c}),\quad R(W,y) = \frac{y^\top W y}{|y|}.
\end{equation}
Here, we introduce the function $R(W,y)$, which is known as the \emph{Rayleigh coefficient} \cite{1985Horn}, returning the ratio similarity of points within cluster $y$. 

An alternative to maximizing the \emph{similarity within} a cluster, is to minimize the \emph{similarity between} clusters. This is known as the \emph{ratio cut} problem, stated for a symmetric similarity matrix $W$ as:
\begin{equation*}
    \min_{Y\in\mathbb{1}^{m\times k}} Cut(W,Y)=\sum_s C(W,Y),\quad C(W,y)=\frac{y^\top W \thickbar{y}}{|y|}.
\end{equation*}
The function $C(W,y)$ sums the similarities between points indicated by cluster $y$ and the remaining points indicated by $\thickbar{y}=\mathbf{1}-y$. Imagining the similarity matrix $W$ as a weighted adjacency matrix of a graph, the function $C(W,y)$ sums the weights of the edges which would be cut if we \emph{cut out} the cluster $y$ from the graph. 
Defining the matrix $L=\diag(W\mathbf{1})-W$, also known as the \emph{difference graph Laplacian} \cite{1997Chung}, we have $C(W,y)=R(L,y)$.
As a result, the maximum similarity problem with respect to the similarity matrix $-L$ is equivalent to the minimum cut problem with similarity matrix $W$.
%------------------------
% Spectral Clustering
%-----------------------
\subsection{Spectral Clustering}
\label{sec:spectralClust}
If similarities are defined via the inner product, then the similarity in Equation~\eqref{eq:maxsimilarity} is maximized when \emph{every} point in a cluster is similar to \emph{every other} point in that cluster. As a result, the obtained clusters by $k$-means have convex shapes. If we expect nonconvex cluster shapes, then our similarities should only locally be compared. This is possible, e.g., by defining the similarity matrix as the adjacency matrix to the $k$NN graph or the $\epsilon$-neighborhood graph. 
%If we wish to employ other similarities than those derived from the inner product, e.g., defined by the graph Laplacian, then a symmetric matrix decomposition $W=UU^\top$ enables the application of Lloyd's well-researched algorithm for $k$-means -- at least in theory~\cite{hess2019spectacl}. 
Clustering methods employing such similarities are known as \emph{spectral clustering} \cite{2001Ng}. It is related to minimizing the cut for the graph Laplacian of a given similarity matrix. Spectral clustering computes a truncated eigendecomposition of  $W=-L\approx V^{(k+1)}\Lambda^{(k+1)}{V^{(k+1)}}^\top$, where $\Lambda^{(k+1)}$ is a diagonal matrix having the $(k+1)$ largest eigenvalues on its diagonal $\Lambda_{11}\geq\ldots\geq \Lambda_{k+1 k+1}$, and $V^{(k+1)}$ represents the corresponding eigenvectors.
%A property of graph Laplacians is that the eigenvectors to the largest eigenvalue indicate the connected components of the graph, which itself is assumed to be connected. 
Graph Laplacian theory says that the eigenvectors to the largest eigenvalue indicate the connected components of the graph, while in practical clustering application the entire graph is assumed to be connected.  
To honor this assumption, the first eigenvector is omitted from the matrix $V^{(k+1)}$, which is subsequently discretized by $k$-means clustering~\cite{von2007tutorial}.   
Considering the relation between the minimum cut objective and $k$-means clustering, the objective to minimize the $Cut(L,Y)$  
is actually equivalent to solving $k$-means clustering for a data matrix $D$ such that $W=DD^\top$. This relation was recently examined~\cite{hess2019spectacl}, with the disillusioning result that $k$-means clustering on the decomposition matrix $D$ usually returns a local optimum, whose objective value is close to the global minimum but whose clustering is unfavorable. Consequently, the authors propose the algorithm \textsc{SpectACl}, approximating the similarity matrix by a matrix product of projected eigenvectors, such that:
\begin{equation}\label{eq:spectacl}
   W\approx DD^\top,\quad D_{ji}=\left|V^{(n)}_{ji}\right|\left|\Lambda^{(n)}_{ii}\right|^{-(1/2 )}
\end{equation} 
for a large enough dimensionality $n>k$. 
Although this increases the rank of the factorization from $k$ in traditional spectral clustering to $n>k$, the search space, which is spanned by the vectors $D_{\cdot i}$, is reduced in \textsc{SpectACl}. The projection of the orthogonal eigenvectors $V_{\cdot i}$ to the positive orthant introduces linear dependencies among the projections $D_{\cdot i}$.
%\WD{these two dimensionalities seem contradictory} This reduces the dimensionality of the search space by introducing linear dependencies between eigenvectors in the projected vectors $D_{\cdot i}$. 
%------------------------------
% Estimating $k$
%------------------------------
\subsection{Estimating $k$}
Depending on the view on $k$-means clustering ---as a matrix factorization, a Gaussian mixture model, or a graph-cut algorithm--- we might define various strategies to derive the correct $k$. The \emph{elbow} strategy is arguably the most general approach. Plotting the value of the objective function for every model when increasing $k$, a kink in the curve is supposed to indicate the correct number of clusters. With regard to spectral clustering, the elbow method is usually deployed on the %decrease of the 
largest eigenvalues of the Laplacian, called \emph{eigengap heuristic}~\cite{von2007tutorial}. Depending on the application, the elbow may not be easy to spot, and the selection of $k$ boils down to a subjective trade-off between data approximation and model complexity.

%scores: MDL, BIC, costs of STSC
To manage this trade-off in a less subjective manner, one can define a cost measure beforehand. Popular approaches employ Minimum Description Length (MDL)~\cite{2007bohm,2005kontkanen} or the Bayesian Information Criterion (BIC)~\cite{2000pelleg}. The nonconvex clustering method Self-Tuning Spectral Clustering (STSC)~\cite{zelnik2005self} defines such a cost measure on the basis of spectral properties of the graph Laplacian. The $k$-means discretization step is replaced by the minimization of this cost measure, resulting in a rotated eigenvector matrix which approximates the form of partitioning matrices, having only one nonzero entry in every row. The definition of the cost measure derives from the observation that a suitable rotation of the eigenvectors also defines a transformation of the graph Laplacian into a block-diagonal form. In this form, the connected components in the graph represent the clustering structure. %The cluster assignments $Y\in\mathbb{1}^{m\times k}$ are then derived by setting all nonzero entries in the rotated eigenvector matrix to one. 
STSC then chooses the largest number $k$ obtaining minimal costs, from a set of considered numbers of clusters. 

The definition of a cost measure may also rely on statistical properties of the dataset.  
Tibshirani et al.\@ deliver the statistical foundations for the elbow method with the gap statistic~\cite{tibshirani2001estimating}. Given a reference distribution, the gap statistic chooses the value of $k$ for which the gap between the approximation error and its 
%WD: Ooh baby I've been expectating you...
%expectated value 
expected value 
is the largest. The expected value is estimated by sampling the data of the reference distribution, and computing a clustering for every sampled dataset and every setting of $k$. 

%statistical methods
The score-based methods cannot deliver a guarantee over the quality of the gained model. This is where statistical methods come into play, whose decisions over the number of clusters are based on statistical tests.
\textsc{GMeans}~\cite{hamerly2004learning} performs statistical tests for the hypothesis that 
%the points in pone cluster 
%Wouter's log, Friday, 21.15PM: I don't know how much longer I can still catch such boo-boos.
the points in one cluster 
are Gaussian distributed. \textsc{PGMeans}~\cite{feng2007pg} improves over \textsc{GMeans}, by applying the Kolmogorov-Smirnov test for the goodness of fit between one-dimensional random projections of the data and the Gaussian mixture model. They empirically show that this approach is also suitable for non-Gaussian data.
An alternative to the Normality assumption is to assume that every cluster follows a unimodal distribution in a suitable space, which can be validated by the dip test. \textsc{DipMeans} provides a wrapper for $k$-means-related algorithms, testing for individual data points whether the distances to other points follow a unimodal distribution~\cite{2012kalogeratos}. Maurus and Plant argue that this approach is sensitive to noise and propose the algorithm \textsc{SkinnyDip}, focusing on scenarios with high background noise~\cite{maurus2016skinny}. Here, the authors assume that a basis transformation exists such that the clusters form a unimodal shape in all coordinate directions. All these approaches require a data transformation or projection, in order to apply the one-dimensional tests.
%======================
% Bounding the Probability that Two Clusters Arise from Distinct
%======================
\section{A Nonparametric Bound}
We propose a bound on the probability that a specific pair of clusters is %not unlikely to derive from the distribution of the unified cluster. 
%WD: "probability...unlikely" is superfluous. Also, "not unlikely to" is unnecessary double negation. Also, "to derive from" is awkward. How about:
generated by a single cluster distribution.
Our bound relies on concentration inequalities, which have as input the mean and variance of the unified cluster distribution, which are easy to estimate. No assumptions on the distribution shape (e.g., Gaussian) must be made, and no projection is required. The core concentration inequality which we employ is the matrix Bernstein inequality.  
%------ THM: Matrix Bernstein -------
\begin{theorem}[{Matrix Bernstein \cite[Theorem 1.4]{tropp2012user}}]\label{thm:matrixBernst}
Consider a sequence of independent, random, symmetric matrices $A_i\in \mathbb{R}^{m\times m}$ for $1\leq i \leq n$. Assume that each random matrix satisfies:
\begin{equation*}
    \mathbb{E}[A_i] = \mathbf{0}\qquad \text{ and }\qquad \|A_i\|_{op}\leq \nu \text{ almost surely,}
\end{equation*}
and set 
$
%\begin{equation*}
    \sigma^2=\left\|\sum_{i}\mathbb{E}[A_i^2]\right\|_{op}.
%\end{equation*}
$
Then, for all $t\geq 0$:
\begin{equation*}
    \mathbb{P}\left(\left\|\sum_iA_i\right\|_{op}\geq t\right)\leq m\exp\left(-\frac12\frac{t^2}{\sigma^2+\nu t/3}\right).
\end{equation*}
\end{theorem}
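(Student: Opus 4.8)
The plan is to run the \emph{matrix Laplace transform} (matrix Chernoff) argument. Since $S=\sum_i A_i$ is symmetric, its operator norm equals $\max\{\lambda_{\max}(S),\lambda_{\max}(-S)\}$, so it suffices to control the upper tail of $\lambda_{\max}(S)$ and then repeat the argument verbatim for $-S$; the summands $-A_i$ satisfy the same hypotheses, because $\|-A_i\|_{op}=\|A_i\|_{op}\leq\nu$ and $\mathbb{E}[(-A_i)^2]=\mathbb{E}[A_i^2]$, and the two tails combine by a union bound to produce the dimensional prefactor. For the upper tail I would first establish the matrix Markov inequality: for every $\theta>0$,
\[
\mathbb{P}(\lambda_{\max}(S) \geq t) \leq e^{-\theta t}\,\mathbb{E}\left[\tr \exp(\theta S)\right],
\]
which follows because $\lambda_{\max}(S)\geq t$ forces $e^{\theta\lambda_{\max}(S)}\geq e^{\theta t}$, and $e^{\theta\lambda_{\max}(S)}\leq\tr\exp(\theta S)$ since $\exp(\theta S)$ is positive definite. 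Everything then reduces to bounding the trace moment generating function $\mathbb{E}[\tr\exp(\theta S)]$.

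The core difficulty is that the $A_i$ do not commute, so $\exp(\theta\sum_i A_i)$ does not factor and the expectation cannot be pushed through the summands. \textbf{This is the main obstacle.} The resolution is a subadditivity property of the matrix cumulant generating function, for which I would invoke Lieb's concavity theorem: the map $X\mapsto\tr\exp(H+\log X)$ is concave on positive definite matrices. Applying Jensen's inequality through this concave map, one peels off the summands one at a time while keeping the expectation \emph{outside} the exponential, yielding
\[
\mathbb{E}\left[\tr\exp(\theta S)\right] \leq \tr\exp\left(\sum_i \log\mathbb{E}\left[\exp(\theta A_i)\right]\right).
\]
The cruder Golden--Thompson peeling of the Ahlswede--Winter argument would also close the proof, but it loses a factor and weakens the variance term; Lieb's theorem is precisely what buys the sharp dependence on $\sigma^2$.

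It then remains to bound each matrix cumulant $\log\mathbb{E}[\exp(\theta A_i)]$ in the Loewner order. Starting from the scalar estimate $e^{x}\leq 1+x+(e^{\theta\nu}-1-\theta\nu)\,x^2/(\theta\nu)^2$, valid on the range $x\in[-\theta\nu,\theta\nu]$ that the eigenvalues of $\theta A_i$ occupy because $\|A_i\|_{op}\leq\nu$, the transfer rule for matrix functions together with $\mathbb{E}[A_i]=\mathbf{0}$ gives
\[
\mathbb{E}\left[\exp(\theta A_i)\right] \preceq I + g(\theta)\,\mathbb{E}\left[A_i^2\right], \qquad g(\theta) = \frac{e^{\theta\nu} - 1 - \theta\nu}{\nu^2},
\]
after which $\log(I+M)\preceq M$ yields $\log\mathbb{E}[\exp(\theta A_i)]\preceq g(\theta)\,\mathbb{E}[A_i^2]$. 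Substituting into the previous display, using monotonicity of the trace exponential under the Loewner order and $\tr\exp(M)\leq m\,\exp(\lambda_{\max}(M))$ with $\lambda_{\max}(\sum_i\mathbb{E}[A_i^2])=\sigma^2$, I obtain
\[
\mathbb{P}(\lambda_{\max}(S) \geq t) \leq m\,\exp\left(g(\theta)\sigma^2 - \theta t\right).
\]
Finally I would bound $g(\theta)\leq\tfrac{\theta^2/2}{1-\theta\nu/3}$ for $0<\theta<3/\nu$ (via the series $e^u-1-u=\sum_{k\geq2}u^k/k!$ and $2/(j+2)!\leq 3^{-j}$) and select $\theta=t/(\sigma^2+\nu t/3)$, which makes the exponent exactly $-\tfrac12\,t^2/(\sigma^2+\nu t/3)$ and delivers the stated bound once both tails are accounted for.
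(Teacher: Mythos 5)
The paper does not prove this theorem; it imports it verbatim from Tropp (Theorem 1.4 of the cited reference), so there is no in-paper proof to compare against. Your sketch is precisely the argument from that source --- matrix Laplace transform, Lieb's concavity theorem for subadditivity of the trace-exponential cumulant, the Loewner-order MGF bound $\log\mathbb{E}[\exp(\theta A_i)]\preceq g(\theta)\,\mathbb{E}[A_i^2]$, and the choice $\theta=t/(\sigma^2+\nu t/3)$ --- and every step checks out, including the series estimate $g(\theta)\leq\frac{\theta^2/2}{1-\theta\nu/3}$ and the verification that the chosen $\theta$ lies in $(0,3/\nu)$. One caveat: the two-sided union bound you invoke yields the prefactor $2m$ for $\|\sum_i A_i\|_{op}$, whereas the prefactor $m$ appears in Tropp's original only for the one-sided quantity $\lambda_{\max}(\sum_i A_i)$; the discrepancy is an artefact of the paper's restatement of the theorem in terms of the operator norm rather than a gap in your argument.
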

The matrix Bernstein bound employs the operator norm. For real-valued, symmetric matrices this equals the maximum eigenvalue in magnitude:
\begin{equation}
    \|A\|_{op} = \sup_{\|x\|= 1}\|Ax\|= \max_{1\leq j\leq m}|\lambda_j(A)|= \max_{x\in\mathbb{R}^{m}}R(A,x).\label{eq:opnorm}
\end{equation}
The relationship to the Rayleigh coefficient is important. This relationship is easy to derive by substituting $A$ with its eigendecomposition. We derive the following central result for the product matrix of centered random matrices.
%------ THM: ZZ Top ----------------- 
\begin{theorem}[$ZZ$ Top Bound]\label{thm:sphericalcow}
Let $Z_{\cdot i}\in\mathbb{R}^{m}$ be independent samples of a random vector with mean zero, such that $\|Z_{\cdot i}\|\leq 1$ for $1\leq i\leq n$. Further, assume that $\mathbb{E}[Z_{ji}Z_{li}]=0$ for $j\neq l$ and $\mathbb{E}[Z_{ji}^2]=\sigma^2$ for $0<\sigma^2<1$ and $1\leq j\leq m$. Then, for $t>0$:
\begin{equation*}
    \mathbb{P}\left(\left\|ZZ^\top - n\sigma^2I\right\|_{op} \geq t\right) \leq m\exp\left(-\frac{1}{2}\cdot\frac{t^2}{n\sigma^2+t/3}\right).
\end{equation*}
\end{theorem}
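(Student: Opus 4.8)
The plan is to recognize $ZZ^\top - n\sigma^2 I$ as a sum of independent, centered, symmetric matrices and to apply Theorem~\ref{thm:matrixBernst} directly. Writing $ZZ^\top = \sum_{i=1}^n Z_{\cdot i}Z_{\cdot i}^\top$, I would set $A_i = Z_{\cdot i}Z_{\cdot i}^\top - \sigma^2 I$, so that $\sum_i A_i = ZZ^\top - n\sigma^2 I$. Each $A_i$ is symmetric, and the $(j,l)$ entry of $\mathbb{E}[Z_{\cdot i}Z_{\cdot i}^\top]$ equals $\mathbb{E}[Z_{ji}Z_{li}]$, which is $0$ for $j\neq l$ and $\sigma^2$ for $j=l$ by hypothesis; hence $\mathbb{E}[Z_{\cdot i}Z_{\cdot i}^\top]=\sigma^2 I$ and $\mathbb{E}[A_i]=\mathbf{0}$. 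Independence of the $Z_{\cdot i}$ carries over to the $A_i$, so the structural assumptions of the matrix Bernstein inequality are met.

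Next I would pin down the two quantities fed into the bound: the almost-sure norm $\nu$ and the variance proxy $\sigma_B^2 = \|\sum_i \mathbb{E}[A_i^2]\|_{op}$. For $\nu$, observe that $Z_{\cdot i}Z_{\cdot i}^\top$ is rank one with single nonzero eigenvalue $\|Z_{\cdot i}\|^2$, so $A_i$ has eigenvalues $\|Z_{\cdot i}\|^2-\sigma^2$ (once) and $-\sigma^2$ (with multiplicity $m-1$). Using Equation~\eqref{eq:opnorm} together with $\|Z_{\cdot i}\|^2\in[0,1]$ gives $\|A_i\|_{op}=\max\{|\|Z_{\cdot i}\|^2-\sigma^2|,\sigma^2\}\le\max\{\sigma^2,1-\sigma^2\}\le 1$, so $\nu=1$ is admissible, matching the $t/3$ term in the target denominator.

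The heart of the argument is bounding $\sigma_B^2$. Expanding $A_i^2=\|Z_{\cdot i}\|^2 Z_{\cdot i}Z_{\cdot i}^\top - 2\sigma^2 Z_{\cdot i}Z_{\cdot i}^\top + \sigma^4 I$ and taking expectations, the middle term collapses to $-2\sigma^4 I$, leaving $\mathbb{E}[A_i^2]=\mathbb{E}[\|Z_{\cdot i}\|^2 Z_{\cdot i}Z_{\cdot i}^\top]-\sigma^4 I$. The difficulty is that the first term is a genuine fourth-moment object that the hypotheses do not determine, since the statement is deliberately nonparametric. I would sidestep this in the positive semidefinite order: because $0\le\|Z_{\cdot i}\|^2\le 1$ and $Z_{\cdot i}Z_{\cdot i}^\top\succeq 0$, we have $\|Z_{\cdot i}\|^2 Z_{\cdot i}Z_{\cdot i}^\top\preceq Z_{\cdot i}Z_{\cdot i}^\top$, and taking expectations (which preserves $\preceq$) yields $\mathbb{E}[\|Z_{\cdot i}\|^2 Z_{\cdot i}Z_{\cdot i}^\top]\preceq\sigma^2 I$. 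Hence $\mathbb{E}[A_i^2]\preceq\sigma^2(1-\sigma^2)I\preceq\sigma^2 I$, and since $A_i^2\succeq 0$ the sum obeys $0\preceq\sum_i\mathbb{E}[A_i^2]\preceq n\sigma^2 I$, so that $\sigma_B^2\le n\sigma^2$.

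Finally I would substitute $\nu=1$ and $\sigma_B^2\le n\sigma^2$ into Theorem~\ref{thm:matrixBernst}. Since the right-hand side of the Bernstein bound is increasing in the variance proxy, replacing $\sigma_B^2$ by its upper bound $n\sigma^2$ only enlarges the estimate, which produces exactly the claimed $m\exp(-\tfrac12 t^2/(n\sigma^2+t/3))$. The step demanding genuine care is the fourth-moment term: the whole proof hinges on converting the almost-sure constraint $\|Z_{\cdot i}\|\le 1$ into the clean semidefinite inequality $\|Z_{\cdot i}\|^2 Z_{\cdot i}Z_{\cdot i}^\top\preceq Z_{\cdot i}Z_{\cdot i}^\top$, rather than invoking any distributional assumption, so this is where I would be most careful to keep the argument honest.
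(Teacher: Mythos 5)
Your proposal is correct and follows essentially the same route as the paper: decompose $ZZ^\top - n\sigma^2 I$ into the centered rank-one summands $A_i = Z_{\cdot i}Z_{\cdot i}^\top - \sigma^2 I$, verify mean zero, bound $\|A_i\|_{op}\le 1$ and the variance proxy by $n\sigma^2$, and invoke matrix Bernstein. You are in fact slightly more careful than the printed proof in two spots (keeping the $-\sigma^2$ eigenvalue when bounding $\|A_i\|_{op}$, and noting that $\sum_i\mathbb{E}[A_i^2]\preceq n\sigma^2 I$ is an inequality requiring monotonicity of the bound, where the paper writes an equality), but the argument is the same.
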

\begin{proof}
We apply the matrix Bernstein inequality (Theorem~\ref{thm:matrixBernst}) to the sum of random matrices:
\[
    ZZ^\top - n\sigma^2 I = \sum_{i} \left(Z_{\cdot i}Z_{\cdot i}^\top - \sigma^2I\right).
\]
%Assuming that the expected values satisfy $\mathbb{E}[Z_{ji}Z_{li}]=0$ for $j\neq l$ and $\mathbb{E}[Z_{ji}Z_{li}]=\sigma^2$ otherwise, 
Assuming that the expected values satisfy $\mathbb{E}[Z_{ji}Z_{li}]=0$ for $j\neq l$ and $\mathbb{E}[Z_{ji}Z_{ji}]=\sigma^2$, 
yields that the random matrix $A_i = Z_{\cdot i}Z_{\cdot i}^\top - \sigma^2$ has mean zero:
$$
    \mathbb{E}\left[A_i\right] = \mathbb{E}\left[Z_{\cdot i}Z_{\cdot i}^\top - \sigma^2I\right] = \left(\mathbb{E}\left[Z_{j i}Z_{l i}\right]\right)_{jl} - \sigma^2I =0.
$$
The operator norm of the random matrices $A_i$ is bounded by one:
\begin{align*}
    \left\|A_i\right\|_{op} &= \left\|Z_{\cdot i}Z_{\cdot i}^\top - \sigma^2I\right\|_{op} = \sup_{\|x\|=1}x^\top \left(Z_{\cdot i}Z_{\cdot i}^\top-\sigma^2 I\right)x = \sup_{\|x\|=1}\left(Z_{\cdot i}^\top x\right)^2-\sigma^2\\ 
    &\leq 1-\sigma^2 \leq 1.
\end{align*}
The expected value of $A_i^2$ is:% given as follows:
$$
    \mathbb{E}\left[\left(Z_{\cdot i}Z_{\cdot i}^\top-\sigma^2I\right)\left(Z_{\cdot i}Z_{\cdot i}^\top-\sigma^2I\right)\right] \leq \mathbb{E}\left[Z_{\cdot i}Z_{\cdot i}^\top-2\sigma^2Z_{\cdot i}Z_{\cdot i}^\top + \sigma^4I\right]=\sigma^2I.
$$
Thus, the norm of the total variance is:% given as
$$
    \left\|\sum_i\mathbb{E}\left[A_iA_i\right]\right\|_{op} = \left\|n \sigma^2I\right\|_{op} = n\sigma^2.
$$
The matrix Bernstein inequality then yields the result.\qed
%\begin{align*}
%    \mathbb{P}\left(\left\|ZZ^\top -n\sigma^2I\right\|_2\geq t\right)
%    &\leq m\exp\left(-\frac{1}{2}\cdot\frac{t^2}{n\sigma^2+t/3}\right).\tag*{\qed}%N.B. notice hackery to get QED symbol in the right place, despite align environment
%\end{align*}
\end{proof}
Applying Equation~(\ref{eq:opnorm}) to Theorem~\ref{thm:sphericalcow} yields the following corollary.
%------------ Corollary ------------------------
\begin{corollary}\label{cor:blimey}
Let $Z_{\cdot i}\in\mathbb{R}^{m}$ be independent samples of a random vector with mean zero, such that $\|Z_{\cdot i}\|\leq 1$ for $1\leq i\leq n$. Further assume that $\mathbb{E}[Z_{\cdot i}Z_{\cdot i}^\top]=\sigma^2I$ for $0<\sigma^2<1$ and $1\leq i\leq n$. 
Let $y\in\{0,1\}^{m}$ be an indicator vector of a cluster candidate, and denote:
\begin{align}
t=R\left(ZZ^\top,y\right) -n\sigma^2. \label{eq:t_Z}
\end{align}
Then, the probability that an indicator vector $y^*\in\{0,1\}^m$ exists with a Rayleigh coefficient 
%at least as large as 
such that 
$R(ZZ^\top,y)\leq R(ZZ^\top,y^*)$, 
%is no larger than
is bounded as follows:
$$
%    \mathbb{P}\left(\max_{y^*\in\{0,1\}^m}R\left(ZZ^\top,y^*\right) - \|\sigma\|^2 \geq t\right) \leq m_s\exp\left(-\frac{1}{2}\cdot\frac{t^2}{\sum_c\sigma_c^2\left(1-\sigma^2_c\right)+t/3}\right)
    \mathbb{P}\left(\max_{y^*\in\{0,1\}^m}R\left(ZZ^\top,y^*\right) - n\sigma^2 \geq t\right) \leq m\exp\left(-\frac{1}{2}\cdot\frac{t^2}{n\sigma^2+t/3}\right).
$$
\end{corollary}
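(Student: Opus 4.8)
The plan is to reduce the statement to Theorem~\ref{thm:sphericalcow} by an event-inclusion argument that bridges the maximal Rayleigh coefficient over binary cluster indicators and the operator norm of the centered product matrix $ZZ^\top - n\sigma^2 I$. Since the hypothesis $\mathbb{E}[Z_{\cdot i}Z_{\cdot i}^\top] = \sigma^2 I$ encapsulates exactly the diagonal and off-diagonal conditions required in Theorem~\ref{thm:sphericalcow}, that theorem applies verbatim and supplies the right-hand side of the bound; the remaining work is purely deterministic.

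First I would rewrite the centered quantity appearing on the left-hand side as a genuine Rayleigh coefficient of the shifted matrix. For any binary vector $y^* \in \{0,1\}^m$ each entry is idempotent, so $(y^*)^\top y^* = |y^*|$. Consequently
\[
R(ZZ^\top, y^*) - n\sigma^2 = \frac{(y^*)^\top ZZ^\top y^*}{|y^*|} - n\sigma^2\frac{(y^*)^\top y^*}{|y^*|} = R\!\left(ZZ^\top - n\sigma^2 I,\, y^*\right),
\]
which converts the left-hand side of the claim into the Rayleigh coefficient of precisely the matrix controlled by Theorem~\ref{thm:sphericalcow}.

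Next I would relax the maximization domain from the binary cube to all of $\mathbb{R}^m$. Because $\{0,1\}^m \subseteq \mathbb{R}^m$, the maximum can only increase, and by Equation~\eqref{eq:opnorm} the maximum of the Rayleigh coefficient over $\mathbb{R}^m$ equals the largest eigenvalue of $ZZ^\top - n\sigma^2 I$, which is bounded by its operator norm. Chaining these relations gives
\[
\max_{y^* \in \{0,1\}^m} R(ZZ^\top, y^*) - n\sigma^2 \;\leq\; \left\|ZZ^\top - n\sigma^2 I\right\|_{op}.
\]
Hence the event $\{\max_{y^*} R(ZZ^\top, y^*) - n\sigma^2 \geq t\}$ is contained in the event $\{\|ZZ^\top - n\sigma^2 I\|_{op} \geq t\}$, and monotonicity of probability together with Theorem~\ref{thm:sphericalcow} yields the stated inequality.

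The step meriting the most care is the final link in that chain: the maximal Rayleigh coefficient equals only the largest \emph{signed} eigenvalue, whereas the operator norm is the largest eigenvalue \emph{in magnitude}. The inequality $\lambda_{\max}(\cdot) \leq \|\cdot\|_{op}$ holds and is all that is needed for this one-sided bound, so nothing is lost here; but it is worth flagging that the reverse direction would fail if the spectrum were dominated by a large negative eigenvalue, which is why the relaxation is safe only in the direction we exploit.
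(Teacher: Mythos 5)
Your proposal is correct and follows essentially the same route as the paper, which derives the corollary in one line by combining Equation~\eqref{eq:opnorm} with Theorem~\ref{thm:sphericalcow}; you merely make explicit the intermediate steps (the shift identity $R(ZZ^\top,y^*)-n\sigma^2=R(ZZ^\top-n\sigma^2I,y^*)$ for binary $y^*$, the relaxation to $\mathbb{R}^m$, and the event inclusion). Your remark that only the one-sided inequality $\lambda_{\max}(\cdot)\leq\|\cdot\|_{op}$ is needed is a correct and worthwhile clarification of a point the paper leaves implicit.
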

In practice, we must estimate the mean and variance of a candidate cluster. In this case, the relationship between the Rayleigh coefficient for the centered random matrix $Z$ and the original data matrix is specified as follows (assuming the data matrix $D$ is reduced to the observations belonging to a single cluster). 
\begin{remark}\label{rem:centeredRM}
Assume we want to bound the probability that two clusters indicated by $y,\thickbar{y}\in\{0,1\}^m$ are parts of one unified cluster represented by $D\in\mathbb{R}^{m\times n}$. 
%We denote with $\mu = \nicefrac{D^\top \mathbf{1}}{m}$ 
We denote with $\mu = \frac1mD^\top \mathbf{1}$ 
the vector of sample means over the columns of $D$. The Rayleigh coefficient of $y$ with respect to the columnwise centered matrix $Z=D-\mathbf{1}\mu^\top$ is equal to (see Appendix~\ref{app:rem1} for full derivation):
$$
R\left(ZZ^\top,y\right) = \frac{|\thickbar{y}|}{m}\left(\frac{|\thickbar{y}|}{m}R\left(DD^\top,y\right)-\frac{|y|}{m}Cut\left(DD^\top,[y\ \thickbar{y}]\right)+\frac{|y|}{m}R\left(DD^\top,\thickbar{y}\right)\right)
$$
\end{remark}
The higher $R\left(ZZ^\top,y\right)$ is, the higher $t$ is in Equation \eqref{eq:t_Z}, and the lower the probability is that $y$ indicates a subset of the cluster represented by $D$.
Remark~\ref{rem:centeredRM} shows that the probability of $y$ indicating a subset of the larger cluster $D$, is determined by three things: the similarity within each of the candidate clusters ($R(DD^\top,y)$ and $R(DD^\top,\thickbar{y})$), the rational cut between these clusters ($Cut\left(DD^\top,[y\ \thickbar{y}]\right)$), and the ratio of points belonging to the one versus the other cluster ($|y|$ versus $|\thickbar{y}|$). As a result, the $ZZ$ Top Bound provides a natural balance of the within- and between-cluster similarity governing acceptance or rejection of a given clustering. 
\subsection{A Strategy to Find a Suitable Number of Clusters}
Remark~\ref{rem:centeredRM} and Corollary~\ref{cor:blimey} provide a method to bound the probability that two clusters are generated by the same distribution. Let us go through an example to discuss how we can employ the proposed bounds in a practical setting. Imagine two clusterings, the one employing a larger cluster covering the records denoted by the index set $\mathcal{J}\subset\{1,\ldots, m\}$, having center $\mu\in\mathbb{R}^n$; the other containing a subset of $\mathcal{J}$, indicated by $y$. Assume the following.
\begin{assumption}\label{ass}
If the indices $\mathcal{J}$ form a \emph{true} cluster, then the columns $D_{\mathcal{J} i}$ are independent samples of a random vector with mean $\mu_i\mathbf{1}\in\mathbb{R}^{|\mathcal{J}|}$, for $\mu_i\in\mathbb{R}$.
\end{assumption}
We then define the $|\mathcal{J}|$-dimensional scaled and centered sample vectors:
\[
Z_{\cdot i}= \frac{1}{\|D_{\mathcal{J} i}\|}(D_{\mathcal{J} i}-\mu_i\mathbf{1}) \quad \text{for all } 1\leq i\leq n.
\]  
Now, if one were to assume that $y\in\{0,1\}^m$ satisfies:
\begin{equation}\label{eq:testRayleigh}
    R(ZZ^\top,y)\geq \sqrt{2n\sigma^2\ln\left(\frac m\alpha\right)+\frac19\ln\left(\frac m\alpha\right)^2} +n\sigma^2 + \frac13\ln\left(\frac m\alpha\right),
\end{equation}
then Corollary~\ref{cor:blimey} implies that the probability of $\mathcal{J}$ being a true cluster and Equation~\eqref{eq:testRayleigh} holding is at most $\alpha$.
Hence, if $\alpha$ is small enough, then we conclude that $\mathcal{J}$ is not a true cluster; this conclusion is wrong only with the small probability $\alpha$ (which functions as a user-set significance level of a hypothesis test). 

Assumption \ref{ass} may not hold for all datasets. In particular, the assumption that the column vectors of the data matrix (comprising points from only one cluster) are sampled with the same variance parameter and a mean vector which is reducible to a scaled constant one vector, is not generally valid. Especially if the features of the dataset come from varying domains, the cluster assumptions may not hold. In this paper, we evaluate the $ZZ$ Top Bound in the scope of spectral clustering, where the feature domains are comparable; every feature corresponds to one eigenvector.
In particular, we consider a decomposition of the similarity matrix as shown in Equation~\eqref{eq:spectacl}. The rank of this decomposition is independent from the expected number of $k$, unlike in traditional spectral clustering algorithms. Hence, a factor matrix $D$ as computed in Equation~\eqref{eq:spectacl} can be treated like ordinary $k$-means input data.

%However \WD{this is not written such that the sentences seemingly contradict each other, so `However' looks like the suboptimal choice. Instead, I think we should link the content more explicitly.}, we focus here on deriving a method suitable for a statistically founded estimation of the number of clusters in spectral clustering. \WD{To me, this reads as if we simply start talking about something else here. The connection with our assumption which may or may not hold, is lost.} As we have seen in Section~\ref{sec:spectralClust}, spectral clustering computes $k$ eigenvectors which serve as input matrix $D=V_{\cdot \{2,\ldots,k+1\}}$ for the $k$-means algorithm. In this view, traditional spectral clustering algorithms employ different datasets as input to $k$-means for every variation of $k$. This complicates the application of wrapper methods in the $k$-means step, since the feature dimension of the $k$-means data input varies. In contrast, the decomposition proposed in~\cite{hess2019spectacl}, employs the dataset $D$, approximating the similarity matrix $W\approx DD^T$\todo{Mention that in the section}, where the dimension $n$ is solely supposed to be larger than $k$. 

\begin{algorithm}[t]
\caption{\textsc{SpecialK}($W, n, \alpha$)}
\begin{algorithmic}[1]
    \State $W\approx V^{(n)}\Lambda^{(n)}{V^{(n)}}^\top$ \Comment{Compute truncated eigendecomposition}
    \State $D_{ji}=\left|V_{ji}^{(n)}\right||\Lambda_{ii}|^{1/2}$ \Comment{For all $1\leq j\leq m$, $1\leq i\leq n$} \label{alg:computeD}
    \For {$k= 1,\ldots$}
        \State $Y^{(k)} \gets $\Call{k-means}{$D,k$} 
        \For {$c_1,c_2 \in \{1,\ldots,k\}, c_1>c_2$}
            \State $\mathcal{J}\gets \left\{j\middle| Y_{jc_1}^{(k)}+Y^{(k)}_{jc_2}>0\right\}$
            \State $\displaystyle Z_{\cdot i}\gets \frac{1}{\|D_{\mathcal{J} i}\|}\left(D_{\mathcal{J} i}-\frac{|D_{\mathcal{J} i}|}{|\mathcal{J}|}\mathbf{1}\right)$ \Comment{For all $j\in\mathcal{J}$}
            \State $\sigma^2\gets \frac{1}{n|\mathcal{J}|}\sum_{j,i}Z_{ji}^2$\Comment{Sample variance}
            \State $t\gets \max\left\{R(ZZ^\top, Y_{\mathcal{J}c})-n\sigma^2\middle| c\in\{c_1,c_2\}\right\}$
            \If {$\displaystyle|\mathcal{J}|\exp\left(-\frac{1}{2}\cdot \frac{t^2}{n\sigma^2+t/3}\right)>\alpha$}
                \State\textbf{return} $Y^{(k-1)}$
            \EndIf
        \EndFor
    \EndFor
\end{algorithmic}
\label{alg:specialK}
\end{algorithm}
We propose Algorithm~\ref{alg:specialK}, called \textsc{SpecialK}, since it provides a method for SPEctral Clustering to Infer the Appropriate Level $k$. Its input is a similarity matrix $W$, the feature dimensionality of the computed embedding, and the significance level $\alpha>0$. In the first two steps, the symmetric decomposition $W\approx DD^\top$ is computed. For an increasing number of clusters, we compute a $k$-means clustering. For every pair of clusters, we compute the probability that both clusters are actually subsets of the unified cluster. If this probability is larger than the significance level $\alpha$, then we conclude that the current clustering splits an actual cluster into two and we return the previous model. 
%======================
% Experiments
%======================
\section{Experiments}
In comparative experiments, our state-of-the-art competitors are 
Self-Tuning Spectral Clustering (STSC) \cite{zelnik2005self} and \textsc{PGMeans} \cite{feng2007pg}, whose implementations have been kindly provided by the authors. Since we strive for applicability on nonconvex cluster shapes, we apply \textsc{PGMeans} to the projected eigenvectors (as computed in Line~\ref{alg:computeD} in Algorithm~\ref{alg:specialK}) of a given similarity matrix. We set for \textsc{PGMeans} the significance level for every test of  12 random projections to $\alpha=0.01/12$. 
%WD: Objection removed, since this paragraph is a work in progress.
We also included \textsc{SkinnyDip} \cite{maurus2016skinny} in our evaluation. However, applying this algorithm on the decomposition matrix results in a vast number of returned clusters ($\hat{k}\approx 50$ while the actual value is $k\leq 3$) where most of the data points are attributed to noise. Since this result is clearly wrong, we eschew further comparison with this algorithm.

We consider two variants of similarity matrices: $W_R$ and $W_C$. The former employs the $\epsilon$-neighborhood adjacency matrix, where $\epsilon$ is set such that $99\%$ of the data points have at least ten neighbors; the latter employs the symmetrically normalized adjacency matrix of the $k$NN graph. 
STSC computes its own similarity matrix for a given set of points. To do so, it requires a number of considered neighbors, which we set to the default value of 15. Note that the result of STSC comes with its own quality measurement for the computed models; higher is better. We provide a Python implementation of \textsc{SpecialK}\footnote{\url{https://github.com/Sibylse/SpecialK}}. In this implementation, we do not assess for all possible pairs of clusters if they emerge from one distribution, but only for the ten cluster pairs having the highest cut.

%---------------------------------
% Synthetic Experiments
%---------------------------------
\subsection{Synthetic Experiments}
%---------FIGURE viz synth------------
\begin{figure}[t!]
\centering
\includegraphics[width=\textwidth]{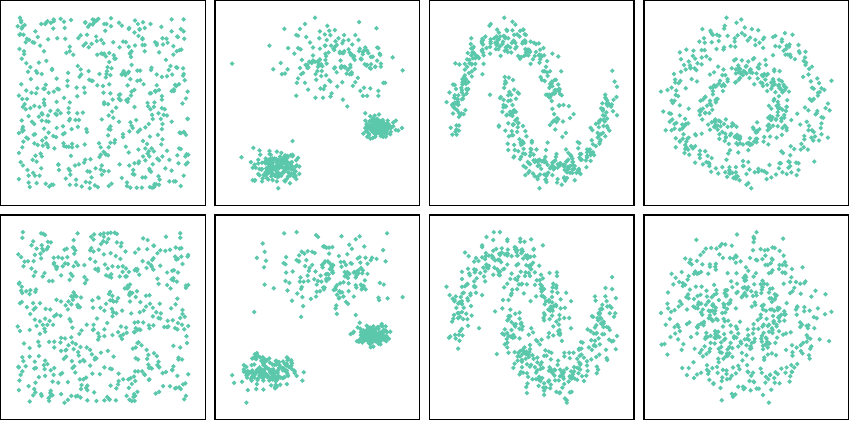}
\caption{Visualization of the datasets (from left to right: random, three blobs, two moons, and two circles) with noise parameters equal to 0.1 (top row) and 0.15 (bottom row).}
\label{fig:vizSynth}
\end{figure}
We generate benchmark datasets, using the scikit library for Python. 
Figure~\ref{fig:vizSynth} provides examples of the generated datasets, which come in four types of seeded cluster shape (random, blobs, moons, and circles), and a noise parameter (set to $0.1$ and $0.15$, respectively, in the figure).   
For each shape and noise specification, we generate $m=1500$ data points. The noise is Gaussian, as provided by the scikit noise parameter (cf.\@ \url{http://scikit-learn.org}). This parameter takes a numeric value, for which we investigate ten settings: we traverse the range $[0,0.225]$ by increments of size $0.025$.  For every shape and noise setting, we generate five datasets.
Unless otherwise specified, we employ a dimensionality of $n=200$ as parameter for \textsc{SpecialK}. However, we use a different rank of $n=50$ for \textsc{PGMeans}, whose results benefit from this setting. We set \textsc{SpecialK}'s significance level to $\alpha=0.01$. For all algorithms, we consider only values of $k\in\{1,\ldots,5\}$; Figure \ref{fig:vizSynth} illustrates that higher values are clearly nonsense.
%---------FIGURE noise synth------------
\begin{figure}[t!]
\centering
\input{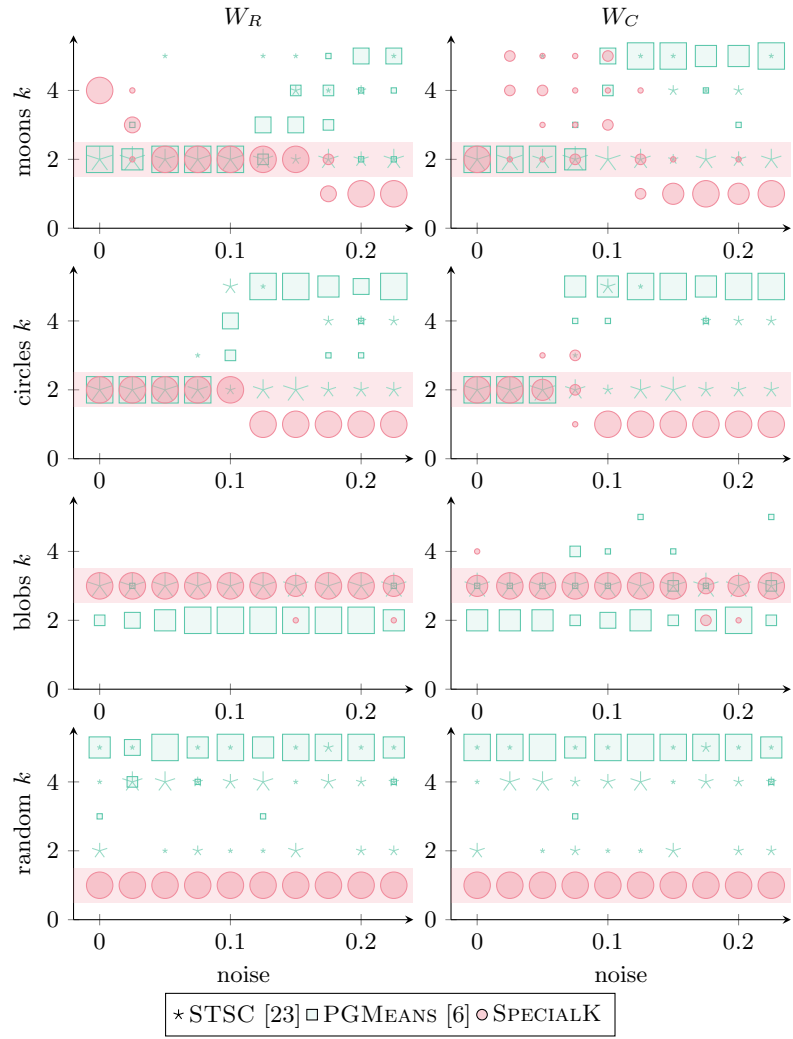} 

\pgfplotsset{
scatSPHSty/.style={scatter, only marks , fill opacity=0.4, fill=cSPH, 
     scatter/use mapped color={
                draw=cSPH,
                fill=cSPH,
            },scatter src=explicit,
     visualization depends on ={\thisrow{z} \as \perpointmarksize},
     scatter/@pre marker code/.append style={/tikz/mark size=1*\perpointmarksize, /tikz/mark color=cCSPH}},
scatSTSCSty/.style={scatter, only marks,fill opacity=0.1, fill=cSTSC, mark=star,
     scatter/use mapped color={
                draw=cSTSC,
                fill=cSTSC,
            },scatter src=explicit,
     visualization depends on ={\thisrow{z} \as \perpointmarksize},
     scatter/@pre marker code/.append style={/tikz/mark size=1*\perpointmarksize}},
scatPGSty/.style={scatter, only marks,fill opacity=0.1, fill=cPG, mark=square*,
     scatter/use mapped color={
                draw=cPG,
                fill=cPG,
            },scatter src=explicit,
     visualization depends on ={\thisrow{z} \as \perpointmarksize},
     scatter/@pre marker code/.append style={/tikz/mark size=1*\perpointmarksize}},
}

\begin{tikzpicture}
%\pgfplotsset{ticks=none}
\begin{groupplot}[group style={group size= 2 by 4, horizontal sep=0.5cm, vertical sep=.5cm},
    	height=.34\textwidth,
    	width=.5\textwidth,
    	legend columns =-1,
        title style={yshift=-1.5ex}, 
        axis lines = left, xmin=-0.02, xmax=0.24]
 	\nextgroupplot[title={$W_R$},
 	ymin=0,ymax=5.6,ylabel={moons $k$} ,
    legend to name=zelda]
        \draw[fill=cSPH, draw opacity=0, fill opacity=0.2] ({axis cs:-0.02,1.5}) rectangle ({axis cs:0.24,2.5});
	    \addplot[scatSTSCSty] table[meta=z]{moonsR_RLS.dat};
	    \addlegendentry{\textsc{STSC} \cite{zelnik2005self}};
	    \addplot[scatPGSty] table[meta=z]{moonsR_PGMeans.dat};
	    \addlegendentry{\textsc{PGMeans} \cite{feng2007pg}};
	    \addplot[scatSPHSty] table[meta=z]{moonsR_rankSph.dat};
 	    \addlegendentry{\textsc{SpecialK}};
    \nextgroupplot[title={$W_C$},
    ymin=0,ymax=5.6]
        \draw[fill=cSPH, draw opacity=0, fill opacity=0.2] ({axis cs:-0.02,1.5}) rectangle ({axis cs:0.24,2.5});
	    \addplot[scatSTSCSty] table[meta=z]{moonsN_RLS.dat};
	    \addplot[scatPGSty] table[meta=z]{moonsN_PGMeans.dat};
	    \addplot[scatSPHSty] table[meta=z]{moonsN_rankSph.dat};
    \nextgroupplot[ylabel={circles $k$},
    ymin=0,ymax=5.6]
        \draw[fill=cSPH, draw opacity=0, fill opacity=0.2] ({axis cs:-0.02,1.5}) rectangle ({axis cs:0.24,2.5});
	    \addplot[scatSTSCSty] table[meta=z]{circlesR_RLS.dat};
	    \addplot[scatPGSty] table[meta=z]{circlesR_PGMeans.dat};
	    \addplot[scatSPHSty] table[meta=z]{circlesR_rankSph.dat};
    \nextgroupplot[ymin=0,ymax=5.6] 
        \draw[fill=cSPH, draw opacity=0, fill opacity=0.2] ({axis cs:-0.02,1.5}) rectangle ({axis cs:0.24,2.5});
	    \addplot[scatSTSCSty] table[meta=z]{circlesN_RLS.dat};
	    \addplot[scatPGSty] table[meta=z]{circlesN_PGMeans.dat};
	    \addplot[scatSPHSty] table[meta=z]{circlesN_rankSph.dat};
    \nextgroupplot[ylabel={blobs $k$},
    ymin=0,ymax=5.6] 
        \draw[fill=cSPH, draw opacity=0, fill opacity=0.2] ({axis cs:-0.02,2.5}) rectangle ({axis cs:0.24,3.5});
	    \addplot[scatSTSCSty] table[meta=z]{blobsR_RLS.dat};
	    \addplot[scatPGSty] table[meta=z]{blobsR_PGMeans.dat};
	    \addplot[scatSPHSty] table[meta=z]{blobsR_rankSph.dat};
    \nextgroupplot[ymin=0,ymax=5.6] 
        \draw[fill=cSPH, draw opacity=0, fill opacity=0.2] ({axis cs:-0.02,2.5}) rectangle ({axis cs:0.24,3.5});
	    \addplot[scatSTSCSty] table[meta=z]{blobsN_RLS.dat};
	    \addplot[scatPGSty] table[meta=z]{blobsN_PGMeans.dat};
	    \addplot[scatSPHSty] table[meta=z]{blobsN_rankSph.dat};
    \nextgroupplot[ymin=0,ymax=5.6,ylabel={random  $k$},xlabel={noise}] 
        \draw[fill=cSPH, draw opacity=0, fill opacity=0.2] ({axis cs:-0.02,0.5}) rectangle ({axis cs:0.24,1.5});
        \addplot[scatSPHSty] table[meta=z]{randomR_rankSph.dat};
	    \addplot[scatSTSCSty] table[meta=z]{randomR_RLS.dat};
	    \addplot[scatPGSty] table[meta=z]{randomR_PGMeans.dat};
    \nextgroupplot[ymin=0,ymax=5.6,xlabel={noise}] 
        \draw[fill=cSPH, draw opacity=0, fill opacity=0.2] ({axis cs:-0.02,0.5}) rectangle ({axis cs:0.24,1.5});
        \addplot[scatSPHSty] table[meta=z]{randomN_rankSph.dat};
	    \addplot[scatSTSCSty] table[meta=z]{randomN_RLS.dat};
	    \addplot[scatPGSty] table[meta=z]{randomN_PGMeans.dat};
\end{groupplot}
\end{tikzpicture}\\

\pgfplotslegendfromname{zelda}
\caption{Variation of noise, comparison of the derived number of clusters for the two moons, two circles, three blobs, and random datasets.}
\label{fig:noiseSynth}
\end{figure}

In Figure~\ref{fig:noiseSynth}, we plot every method's estimated number of clusters for the four datasets (rows) and two similarity matrices (columns) $W_R$ and $W_C$ (since STSC employs its own similarity matrix, the plot with respect to STSC does not vary for these settings: STSC behaves exactly the same in left-column and right-column subplots on the same dataset). In every subplot, the $x$-axis denotes the setting for the noise parameter.  On the $y$-axis we aggregate the number of clusters detected by each of the three competitors; the correct number of clusters ($3$ for blobs, $2$ for moons and circles, and $1$ for random) for each subplot is highlighted by a pink band.  Every column in every subplot corresponds to a single setting of shape and noise; recall that we generated five version of the dataset for such a setting.  The column now gathers for each of the three competitors (marked in various shapes; see the legend of the figure) which setting of $k$ it determines to be true how often out of the five times.  This frequency is represented by mark size: for instance, if \textsc{PGMeans} determines five distinct values of $k$ for the five datasets, we get five small squares in the column, but if it determines the same value of $k$ for all five datasets, we get one big square in the column.  An algorithm performs well if many of its big marks fall in the highlighted band.

Figure~\ref{fig:noiseSynth} illustrates that \textsc{PGMeans} is all over the place.  For the moons and circles datasets, it correctly identifies the number of clusters at low noise levels, but from a certain noise level onwards, it substantially overestimates $k$. On the moons dataset under $W_R$ this behavior is subtle; under $W_C$ and on the circles dataset the jump from $2$ to $5$ clusters is jarring.  On the blobs dataset, which really isn't that difficult a task, \textsc{PGMeans} systematically underestimates $k$. STSC, on the other hand, does quite well.  It doesn't make a single mistake on the blobs dataset. STSC generally has the right idea on the circles and moons datasets: at low noise levels, it correctly determines $k$, and at higher noise levels, it alternates between the correct number of clusters and an overestimation. Conversely, \textsc{SpecialK} has a tendency to err on the other side, if at all.  On the circles dataset, it correctly identifies the number of clusters at low noise levels, and packs all observations into a single cluster at high noise levels, which, visually, looks about right (cf.\@ Figure \ref{fig:vizSynth}, lower right). On the blobs dataset, \textsc{SpecialK} generally finds the right level of $k$, only making incidental mistakes.  Performance seems more erratic on the two moons dataset, especially with the $W_C$ similarity matrix. Similarity matrices to this dataset exhibit unusual effects, which are due to the symmetry of the two clusters \cite{hess2019spectacl}.  To counter these effects, we discard all eigenvectors which are extremely correlated, which we define as having an absolute Spearman rank correlation $|\rho|>0.95$.  Subsequently, the rank is correctly estimated until the noise makes the two clusters appear as one. 
%This breaking point %, the amount of noise from which on \textsc{SpecialK} starts underestimating the number of clusters, 
%is approximately reached at noise parameter setting $0.1$ for the moons and circles.

The bottom row of Figure \ref{fig:noiseSynth} is quite revealing.  It illustrates how both STSC and \textsc{PGMeans} are prone to overfitting.  On data that is pure noise, both these methods tend to find several clusters anyway, despite there being no natural grouping in the dataset: it really rather is one monolithic whole.  \textsc{SpecialK} is the only algorithm capable of identifying this, and it does so without a single mistake.  Oddly, STSC seems to favor an even number of clusters in random data. \textsc{PGMeans} tends to favor defining as many clusters as possible on random data.  %If you need an algorithm that can correctly identify a sea of noise, \textsc{SpecialK} is the only choice.

Empirical results on the sensitivity to the input parameter $n$ (the employed number of eigenvectors) are given in Appendix~\ref{app:sensitivity_n}.
%---------------------------------------
% Real-World Data Experiments
%---------------------------------------
\subsection{Real-World Data Experiments}\label{sec:expReal}
Experiments on synthetic datasets provide ample evidence that \textsc{PGMeans} cannot compete with STSC and \textsc{SpecialK}.  Hence, we conduct experiments with only those latter two algorithms on selected real-world datasets, whose characteristics are summarized in the left half of Table~\ref{tbl:datasets}. The Pulsar dataset\footnote{\url{https://www.kaggle.com/pavanraj159/predicting-pulsar-star-in-the-universe}} contains samples of Pulsar candidates, where the positive class of real Pulsar examples poses a minority against noise effects. The Sloan dataset\footnote{\url{https://www.kaggle.com/lucidlenn/sloan-digital-sky-survey}} comprises measurements of the Sloan Digital Sky Survey, where every observation belongs either to a star, a galaxy, or a quasar. The MNIST dataset~\cite{lecun1998gradient} 
is a well-known collection of handwritten numbers: the ten classes are the ten digits from zero to nine.
The HMNIST dataset~\cite{kather2016multi} comprises histology tiles from patients with colorectal cancer. The classes correspond to eight types of tissue. For these real-world datasets, we increase \textsc{SpecialK}'s parameter to $n=1000$, since the real-world datasets have at least three times as many examples as the synthetic datasets.
\begin{table}[t]
\centering
\caption{Experimental results on real-world datasets. The left half contains metadata on the employed datasets: names, numbers of rows ($m$) and columns ($d$), and the real number of classes in the data (Actual $k$). The right half contains the results of the experiments: the number of classes $k$ determined by the algorithms STSC and \textsc{SpecialK}, the latter parameterized with similarity matrices $W_R$ and $W_C$.}\label{tbl:datasets}
\begin{tabular}{lrrr|ccc}\toprule
    Dataset & $m$ & $d$ & Actual $k$ & \multicolumn{3}{c}{Determined $k$}\\
    &&&&STSC&\multicolumn{2}{c}{\textsc{SpecialK}}\\
    &&&&&$W_R$&$W_C$\\ \midrule
    Pulsar & 17\thinspace898 & 9 & 2 &2&2&2\\
    Sloan & 10\thinspace000 & 16 & 3 &4&4&2\\
    MNIST & 60\thinspace000 & 784 & 10 &2&2&3 \\ 
    HMNIST & 5\thinspace000 & 4\thinspace 096 & 8 &3&4&4\\\bottomrule
    \end{tabular}
\end{table}

\begin{table}[t]
    \centering
    \caption{NMI scores, probability bounds ($p$) and costs for the MNIST dataset. The selected rank and the corresponding NMI score is highlighted for every method.}
    \label{tbl:nmi_mnist}
    \begin{tabular}{r|cc|cc|cc}\toprule
    & \multicolumn{4}{c|}{\textsc{SpecialK}} & \multicolumn{2}{c}{STSC \cite{zelnik2005self}}\\
    & \multicolumn{2}{c|}{$W_C$} & \multicolumn{2}{c|}{$W_R$}\\
        $k$ & NMI & $p$ & NMI & $p$ & NMI & quality\\ \midrule      
        2&0.317&$10^{-6}$&\textbf{0.195}&$\mathbf{10^{-23}}$&\textbf{0.306}&\textbf{0.987}\\
        3&\textbf{0.518}&$\mathbf{10^{-4}}$&0.207&1.000&0.290&0.978\\
        4&0.668&0.019&0.244&1.000&0.282&0.969\\
        5&0.687&0.011&0.281&1.000&0.274&0.970\\
        6&0.759&0.004&0.294&1.000&0.271&0.970\\
        7&0.760&1.000&0.311&1.000&0.287&0.948\\
        8&0.759&1.000&0.333&1.000&0.279&0.954\\
        9&0.757&1.000&0.347&1.000&0.277&0.956\\
        10&0.756&1.000&0.350&1.000&0.297&0.942\\
        11&0.747&1.000&0.348&1.000&0.362&0.957\\
        \bottomrule
    \end{tabular}
\end{table}

Results of the procedure when we let $k$ simply increase up to eleven are given in Table \ref{tbl:nmi_mnist}.  By $p$ we denote the maximum of the probability bounds \textsc{SpecialK} computes, as outlined in line $10$ of Algorithm \ref{alg:specialK} and mirrored at the end of Corollary \ref{cor:blimey}. For STSC, we output the quality values on which the algorithm bases its decisions (higher is better).  Additionally, we give the Normalized Mutual Information (NMI) scores between the constructed clustering and the actual class labels, matched via the Hungarian algorithm \cite{1955Kuhn}; typically, higher is better. 

STSC returns the $k$ for which the quality column contains the highest value.  By Algorithm \ref{alg:specialK}, \textsc{SpecialK} returns the lowest $k$ for which its $p$-value is below $\alpha$, while the $p$-value for $k+1$ is above $\alpha$.  The selected values are highlighted in Table \ref{tbl:nmi_mnist}, and the determined values for $k$ are entered in the right half of Table \ref{tbl:datasets}.  

Across all datasets, the right half of Table \ref{tbl:datasets} gives the determined values for $k$.  All methods reconstruct the actual $k$ well on the Pulsar dataset, but none of the determined values for $k$ are equal to the actual value for $k$ on the other three datasets.  On Sloan, both methods are in the correct ballpark.  On HMNIST, \textsc{SpecialK} is closer to the right answer than STSC.  On MNIST, the true value of $k$ is $10$, but both methods determine a substantially lower number of clusters as appropriate.  One can get more information on the behavior of the algorithms by taking a closer look at Table \ref{tbl:nmi_mnist}; similar tables for the other datasets can be found in Appendix~\ref{app:expAddReal}.

For STSC, the highest NMI value is actually obtained for $k=11$, which is a too high number of clusters, but quite close to the actual $k$.  However, the computed quality does not mirror this finding.  Also, the NMI value for the actual $k=10$ is substantially lower than the NMI for $k=11$, and the NMI for $k=10$ is lower than the NMI value for the selected $k=2$.  Hence, NMI cannot just replace the quality in STSC.  For \textsc{SpecialK}, $p$-value behavior is unambiguous under $W_R$.  Notice that NMI peaks at the right spot.  

Under $W_C$, things get more interesting.  While the $p$-value for $k=4$ indeed surpasses the threshold $\alpha$, a slightly less restrictive setting (in many corners of science, $\alpha=5\%$ is considered acceptable) would have changed the outcome to $k=6$.  At that stage, the $p$-value suddenly unambiguously shoots to $1$; more than $6$ clusters is definitely not acceptable.  We see this behavior persist through the other datasets as well: there is always a specific value of $k$, such that the $p$-values for all $k'>k$ are drastically higher than the $p$-values for all $k''\leq k$.  While Algorithm \ref{alg:specialK} makes an automated decision (and this is a desirable property of an algorithm; no post-hoc subjective decision is necessary), if an end-user wants to invest the time to look at the table and select the correct number of clusters themselves, the $p$-values give a clear and unambiguous direction to that decision.

The entire last paragraph glosses over the fact that the actual $k$ for MNIST is not $6$, but $10$.  In fact, at first sight, the right half of Table \ref{tbl:datasets} paints an unpleasant picture when compared to the Actual $k$ column in the left half. However, the correct conclusion is that that column label is misleading.  We have some real-world datasets with a given class label, providing a natural grouping of the data into $k$ clusters.  The task of clustering is also to find a natural grouping in the dataset.  Clustering, however, is not necessarily built to \emph{reconstruct any given} natural grouping: this task is unsupervised!  Hence, if an algorithm finds a natural group on the MNIST dataset of relatively bulbous digits, this is a rousing success in terms of the clustering task.  However, this group encompasses the digits $6$, $8$, and $9$ (and perhaps others), which reduces the cardinality of the resulting clustering when compared to the clustering that partitions all digits.  Therefore, no hard conclusions can be drawn from the determined $k$ not matching the actual $k$.  This is a cautionary tale (also pointed out in \cite{hess2019spectacl}), warning against a form of evaluation that is pervasive in clustering, but does not measure the level of success it ought to measure in a solution to the clustering task.

%====================
% Conclusions
%====================
\section{Conclusions}

We propose a probability bound, that enables to make a hard, statistically founded decision on the question whether two clusters should be fused together or kept apart.  Given a significance level $\alpha$ (with the usual connotation and canonical value settings), this results in an algorithm for spectral clustering, automatically determining the appropriate number of clusters $k$.  Since it provides a method for SPEctral Clustering to Infer the Appropriate Level $k$, the algorithm is dubbed \textsc{SpecialK}.  Automatically determining $k$ in a statistically nonparametric manner for clusters with nonconvex shapes is, to the best of our knowledge, a novel contribution to data mining research.  Also, unlike existing algorithms, \textsc{SpecialK} can decide that the data encompasses only one cluster.

\textsc{SpecialK} is built to automatically make a decision on which $k$ to select, which it does by comparing subsequent $p$-values provided by the probability bound, and checking whether they undercut the significance level $\alpha$. As a consequence, the user can elect to simply be satisfied with whatever $k$ \textsc{SpecialK} provides.  In the experiments on the MNIST dataset, we have seen that the perfect setting of $k$ is in the eye of the beholder: several people can have several contrasting opinions of what constitutes a natural grouping of the data in a real-world setting.  In such a case, one can extract more meaningful information out of the results \textsc{SpecialK} provides, by looking into the table of NMI scores and $p$-values for a range of settings of $k$.  Eliciting meaning from this table is a subjective task.  However, in all such tables for all datasets we have seen so far, there is a clear watershed moment where $k$ gets too big, and relatively low $p$-values are followed by dramatically high $p$-values for any higher $k$.  Turning this soft observation into a hard procedure would be interesting future work.
%
% the environments 'definition', 'lemma', 'proposition', 'corollary',
% 'remark', and 'example' are defined in the LLNCS documentclass as well.
%

%
% ---- Bibliography ----
%
% BibTeX users should specify bibliography style 'splncs04'.
% References will then be sorted and formatted in the correct style.
%
\bibliographystyle{splncs04}

\begin{thebibliography}{10}
\providecommand{\url}[1]{\texttt{#1}}
\providecommand{\urlprefix}{URL }
\providecommand{\doi}[1]{https://doi.org/#1}

\bibitem{2010Alamgir}
Alamgir, M., von Luxburg, U.: Multi-agent random walks for local clustering on
  graphs. In: Proc.\@ {ICDM}. pp. 18--27 (2010)

\bibitem{bauckhage2015clustering}
Bauckhage, C., Drachen, A., Sifa, R.: Clustering game behavior data. IEEE
  Transactions on Computational Intelligence and AI in Games  \textbf{7}(3),
  266--278 (2015)

\bibitem{bishop2006pattern}
Bishop, C.M.: Pattern recognition and machine learning. Springer (2006)

\bibitem{2007bohm}
B{\"o}hm, C., Faloutsos, C., Pan, J.Y., Plant, C.: {RIC}: Parameter-free
  noise-robust clustering. Transactions on Knowledge Discovery from Data
  \textbf{1}(3), ~10 (2007)

\bibitem{1997Chung}
Chung, F.R.K.: Spectral Graph Theory. American Mathematical Society (1997)

\bibitem{feng2007pg}
Feng, Y., Hamerly, G.: P{G}-means: learning the number of clusters in data. In:
  Advances in neural information processing systems. pp. 393--400 (2007)

\bibitem{hamerly2004learning}
Hamerly, G., Elkan, C.: Learning the k in k-means. In: Advances in neural
  information processing systems. pp. 281--288 (2004)

\bibitem{hess2019spectacl}
Hess, S., Duivesteijn, W., Honysz, P., Morik, K.: The {S}pect{AC}l of nonconvex
  clustering: A spectral approach to density-based clustering. In: Proc. AAAI
  (2019)

\bibitem{1985Horn}
Horn, R.A., Johnson, C.A.: Matrix Analysis. Cambridge University Press (1985)

\bibitem{2014Hou}
{Hou}, J., {Sha}, C., {Chi}, L., {Xia}, Q., {Qi}, N.: Merging dominant sets and
  {DBSCAN} for robust clustering and image segmentation. In: Proc. ICIP. pp.
  4422--4426 (2014)

\bibitem{2012kalogeratos}
Kalogeratos, A., Likas, A.: Dip-means: an incremental clustering method for
  estimating the number of clusters. In: Advances in neural information
  processing systems. pp. 2393--2401 (2012)

\bibitem{kather2016multi}
Kather, J.N., Weis, C.A., Bianconi, F., Melchers, S.M., Schad, L.R., Gaiser,
  T., Marx, A., Z{\"o}llner, F.G.: Multi-class texture analysis in colorectal
  cancer histology. Scientific reports  \textbf{6} (2016)

\bibitem{2005kontkanen}
Kontkanen, P., Myllym{\"a}ki, P., Buntine, W., Rissanen, J., Tirri, H.: An
  {MDL} framework for data clustering. In: Neural information processing
  series. Advances in minimum description length theory and applications. pp.
  323--353 (2005)

\bibitem{1955Kuhn}
Kuhn, H.W.: The {H}ungarian method for the assignment problem. Naval research
  logistics quarterly  \textbf{2}(1-2),  83--97 (1955)

\bibitem{lecun1998gradient}
LeCun, Y., Bottou, L., Bengio, Y., Haffner, P.: Gradient-based learning applied
  to document recognition. Proceedings of the IEEE  \textbf{86}(11),
  2278--2324 (1998)

\bibitem{1982Lloyd}
Lloyd, S.P.: Least square quantization in {PCM}. IEEE Transactions on
  Information Theory  \textbf{28}(2),  129--137 (1982)

\bibitem{von2007tutorial}
{\noop{Luxburg}}{v}on~Luxburg, U.: A tutorial on spectral clustering.
  Statistics and computing  \textbf{17}(4),  395--416 (2007)

\bibitem{maurus2016skinny}
Maurus, S., Plant, C.: Skinny-dip: clustering in a sea of noise. In: Proc. KDD.
  pp. 1055--1064 (2016)

\bibitem{2001Ng}
Ng, A.Y., Jordan, M.I., Weiss, Y.: On spectral clustering: Analysis and an
  algorithm. In: Proc. NIPS. pp. 849--856 (2001)

\bibitem{2000pelleg}
Pelleg, D., Moore, A.W.: X-means: Extending k-means with efficient estimation
  of the number of clusters. In: Proc.\@ ICML. pp. 727--734 (2000)

\bibitem{tibshirani2001estimating}
Tibshirani, R., Walther, G., Hastie, T.: Estimating the number of clusters in a
  data set via the gap statistic. Journal of the Royal Statistical Society:
  Series B (Statistical Methodology)  \textbf{63}(2),  411--423 (2001)

\bibitem{tropp2012user}
Tropp, J.A.: User-friendly tail bounds for sums of random matrices. Foundations
  of computational mathematics  \textbf{12}(4),  389--434 (2012)

\bibitem{zelnik2005self}
Zelnik-Manor, L., Perona, P.: Self-tuning spectral clustering. In: Proc. NIPS.
  pp. 1601--1608 (2005)

\end{thebibliography}

\begin{appendix}
\section{Full Derivation of Remark 1}\label{app:rem1}

\begin{remark}\label{rem:centeredRM}
Assume we want to bound the probability that two clusters indicated by $y,\thickbar{y}\in\{0,1\}^m$ are parts of one unified cluster represented by $D\in\mathbb{R}^{m\times n}$. 
%We denote with $\mu = \nicefrac{D^\top \mathbf{1}}{m}$ 
We denote with $\mu = \frac1mD^\top \mathbf{1}$ 
the vector of sample means over the columns of $D$. The Rayleigh coefficient of $y$ with respect to the columnwise centered matrix $Z=D-\mathbf{1}\mu^\top$ is equal to:
\begin{align*}
    R\left(ZZ^\top,y\right) &= \frac{y^\top DD^\top y}{|y|} -2 \frac{y^\top \mathbf{1}\mu^\top D^\top y}{|y|} + \frac{y^\top \mathbf{1}\mu^\top \mu \mathbf{1}^\top y}{|y|}\\
    &= R(DD^\top,y) -2\frac{\mathbf{1}^\top D D^\top y}{m} + \frac{|y|}{m}\frac{\mathbf{1}^\top DD^\top \mathbf{1}}{m}.
    %&= R(DD^\top,y) -2\frac{\mathbf{1}^\top D D^\top y}{m} + \frac{|y|}{m}R(DD^\top,\mathbf{1}).
\end{align*}
We replace the constant one vector with $\mathbf{1}=y+\thickbar{y}$, rewriting the second and third term as:
\begin{align*}
    \frac{\mathbf{1}^\top DD^\top\mathbf{1}}{m}
%    &= \frac{|y|}{m}R\left(DD^\top,y\right)+\frac{|y|}{m}C\left(DD^\top,y\right)+\frac{|\thickbar{y}|}{m}C\left(DD^\top,\thickbar{y}\right)+ \frac{|\thickbar{y}|}{m}R\left(DD^\top,\thickbar{y}\right)\\
    =& \frac{|y|}{m}R\left(DD^\top,y\right)+\frac{|y|}{m}C\left(DD^\top,y\right)\\&\phantom{\frac{|y|}{m}R\left(DD^\top,y\right)}+\frac{|\thickbar{y}|}{m}C\left(DD^\top,\thickbar{y}\right)+ \frac{|\thickbar{y}|}{m}R\left(DD^\top,\thickbar{y}\right)\\
    -2\frac{\mathbf{1}^\top DD^\top y}{m} =& 
    -2\frac{|y|}{m}R\left(DD^\top,y\right) - \frac{|y|}{m}C\left(DD^\top,y\right) - \frac{|\thickbar{y}|}{m}C\left(DD^\top,\thickbar{y}\right). 
\end{align*}
\begin{align*}
    R\left(ZZ^\top,y\right) =& \left(1-2\frac{|y|}{m}+\frac{|y|^2}{m^2}\right)R\left(DD^\top,y\right) -\frac{|y|}{m}\left(1-\frac{|y|}{m}\right)C\left(DD^\top,y\right)\\
    &-\frac{|\thickbar{y}|}{m}\left(1-\frac{|y|}{m}\right)C\left(DD^\top,\thickbar{y}\right)+\frac{|y|}{m}\frac{|\thickbar{y}|}{m}R\left(DD^\top,\thickbar{y}\right)\\
    =&\left(1-\frac{|y|}{m}\right)^2R\left(DD^\top,y\right)- \frac{|y|}{m}\frac{|\thickbar{y}|}{m}C\left(DD^\top,y\right)\\&-\frac{|y|}{m}\frac{|\thickbar{y}|}{m}C\left(DD^\top,\thickbar{y}\right)+ \frac{|y|}{m}\frac{|\thickbar{y}|}{m}R\left(DD^\top,\thickbar{y}\right)\\
    =&\frac{|\thickbar{y}|}{m}\left(\frac{|\thickbar{y}|}{m}R\left(DD^\top,y\right)-\frac{|y|}{m}Cut\left(DD^\top,[y\ \thickbar{y}]\right)+\frac{|y|}{m}R\left(DD^\top,\thickbar{y}\right)\right)
\end{align*}
\end{remark}

\section{Sensitivity to Parameter $n$}\label{app:sensitivity_n}

%---------FIGURE d synth------------
\begin{figure}[h]
\centering
\input{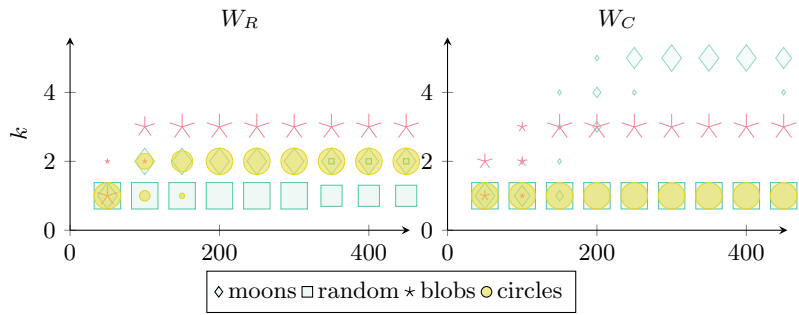}
\caption{Variation of the number of employed eigenvectors $n$, comparison of the derived number of clusters for the two moons, two circles, three blobs, and random datasets.}
\label{fig:dSynth}
\end{figure}

Figure \ref{fig:dSynth} displays the effect of increasing the number of employed eigenvectors on the number of clusters determined by \textsc{SpecialK}, with similarity matrix $W_R$ (left plot) and $W_C$ (right plot). Recall the correct numbers of clusters: $k=1$ for the random dataset, $k=2$ for the moons and circles dataset, and $k=3$ for the blobs dataset. Here, the noise parameter is set to $0.1$. With an increasing number of eigenvectors, the algorithm stabilizes in its insistence on a certain choice of $k$ (albeit the right choice in only six out of the eight cases, with underestimation for circles under $W_C$ and overestimation for moons under $W_C$).

\section{Additional Real-World Experimental Results}\label{app:expAddReal}

We provide tables analogous to \cite[Table 2]{2019Hess} for the remaining real-world data experiments.  This information can be found for the Pulsar dataset in Table \ref{tbl:nmi_pulsar}, for Sloan in Table \ref{tbl:nmi_sloan}, and for HMNIST in Table \ref{tbl:nmi_hmnist}.

\begin{table}[t]
    \centering
    \caption{NMI scores, probability bounds ($p$) and costs for the Pulsar dataset. The selected rank and the corresponding NMI score is highlighted for every method.}
    \label{tbl:nmi_pulsar}
    \begin{tabular}{r|cc|cc|cc}\toprule
    & \multicolumn{4}{c|}{\textsc{SpecialK}} & \multicolumn{2}{c}{STSC \cite{zelnik2005self}}\\
    & \multicolumn{2}{c|}{$W_C$} & \multicolumn{2}{c|}{$W_R$}\\
        $k$ & NMI & $p$ & NMI & $p$ & NMI & quality\\ \midrule      
        2&\textbf{0.014}&$\mathbf{10^{-11}}$ &\textbf{0.120}&$\mathbf{10^{-11}}$&\textbf{0.006}&\textbf{0.996}\\
        3&0.134&1.000&0.123&1.000&0.012&0.994\\
        4&0.130&1.000&0.112&1.000&0.017&0.988\\
        5&0.158&1.000&0.131&1.000&0.024&0.978\\
        6&0.149&1.000&0.126&1.000&0.021&0.961\\
        7&0.108&1.000&0.119&1.000&0.177&0.949\\
        8&0.106&1.000&0.115&1.000&0.192&0.963\\
        9&0.113&1.000&0.113&1.000&0.188&0.966\\
        10&0.111&1.000&0.132&1.000&0.161&0.967\\\bottomrule
    \end{tabular}
\end{table}
\begin{table}[t]
    \centering
    \caption{NMI scores, probability bounds ($p$) and costs for the Sloan dataset. The selected rank and the corresponding NMI score is highlighted for every method.}
    \label{tbl:nmi_sloan}
    \begin{tabular}{r|cc|cc|cc}\toprule
    & \multicolumn{4}{c|}{\textsc{SpecialK}} & \multicolumn{2}{c}{STSC \cite{zelnik2005self}}\\
    & \multicolumn{2}{c|}{$W_C$} & \multicolumn{2}{c|}{$W_R$}\\
        $k$ & NMI & $p$ & NMI & $p$ & NMI & quality\\ \midrule      
        2&\textbf{0.009}&\textbf{0.004}&0.093&$10^{-40}$&0.395&0.999\\
        3&0.062&1.000&0.224&$10^{-14}$&0.364&0.999\\
        4&0.062&1.000&\textbf{0.219}&$\mathbf{10^{-14}}$&\textbf{0.359}&\textbf{0.999}\\
        5&0.069&1.000&0.249&1.000&0.346&0.998\\
        6&0.064&1.000&0.209&1.000&0.341&0.997\\
        7&0.071&1.000&0.227&1.000&0.339&0.997\\
        8&0.071&1.000&0.228&1.000&0.336&0.997\\
        9&0.068&1.000&0.224&1.000&0.335&0.997\\
        10&0.069&1.000&0.219&1.000&0.314&0.988
\\\bottomrule
    \end{tabular}
\end{table}
\begin{table}[t]
    \centering
    \caption{NMI scores, probability bounds ($p$) and costs for the HMNIST dataset. The selected rank and the corresponding NMI score is highlighted for every method.}
    \label{tbl:nmi_hmnist}
    \begin{tabular}{r|cc|cc|cc}\toprule
    & \multicolumn{4}{c|}{\textsc{SpecialK}} & \multicolumn{2}{c}{STSC \cite{zelnik2005self}}\\
    & \multicolumn{2}{c|}{$W_C$} & \multicolumn{2}{c|}{$W_R$}\\
        $k$ & NMI & $p$ & NMI & $p$ & NMI & quality\\ \midrule      
        2&0.326&$10^{-28}$&0.449&$10^{-34}$&0.375&0.964\\
        3&0.291&$10^{-17}$&0.369&0.262&\textbf{0.528}&\textbf{0.990}\\
        4&\textbf{0.452}&$\mathbf{10^{-8}}$&\textbf{0.376}&\textbf{0.060}&0.511&0.962\\
        5&0.437&0.307&0.393&1.000&0.492&0.949\\
        6&0.429&1.000&0.388&1.000&0.482&0.941\\
        7&0.417&1.000&0.391&1.000&0.469&0.938\\
        8&0.414&1.000&0.403&1.000&0.487&0.957\\
        9&0.419&1.000&0.404&1.000&0.476&0.944\\
        10&0.439&1.000&0.417&1.000&0.475&0.946\\
        11&0.404&1.000&0.435&1.000&0.479&0.949\\\bottomrule
    \end{tabular}
\end{table}

To augment the result analyzed in the penultimate paragraph of Section~\ref{sec:expReal}, we give here a short discussion of the found clustering.  Recall that we analyze the MNIST dataset, partitioning the handwritten digits with the \textsc{SpecialK} algorithm under similarity matrix $W_C$.  If our significance level $\alpha$ would have been set to the not uncommon $0.05$ instead of our $0.01$, we would have found a clustering into $k=6$ clusters.  These clusters would roughly correspond to the following partitioning of digits. First cluster: 4, 7, 9. Second cluster: 6, 5. Third cluster: 8, 3 (plus some extra 9s and 5s). Fourth cluster: 0. Fifth cluster: 1. Sixth cluster: 2.  We see that the first cluster focuses mostly on those digits with a stroke along the antidiagonal of the picture.  The second and third cluster encompass two types of bulbous digits, and the remaining three clusters specialize in the remaining three characters.  This does not correspond to a complete partitioning of the ten digits, but one could reasonably defend that this clustering is indeed a natural grouping of the dataset.
\end{appendix}
\end{document}